\DeclarePairedDelimiter{\norm}{\lVert}{\rVert}
\def\thmheadbrackets#1#2#3{%
	\thmname{#1}\thmnumber{\@ifnotempty{#1}{ }\@upn{#2}}%
	\thmnote{ {\the\thm@notefont[#3]}}}
\newtheoremstyle{brakets}
{}
{}
{\itshape}
{}
{\bfseries}
{.}
{ }
{\thmheadbrackets{#1}{#2}{#3}}
\newtheoremstyle{defbrakets}
{}
{}
{\normalfont}
{}
{\bfseries}
{.}
{ }
{\thmheadbrackets{#1}{#2}{#3}}
\newtheoremstyle{defproblem}
{}
{}
{\normalfont}
{}
{\bfseries}
{.}
{ }
{\thmheadbrackets{#1}{#2}{#3}}
\theoremstyle{brakets}
\newtheorem{thm}{Theorem}
\newtheorem*{thm-non}{Theorem} 
\newtheorem{cor}[thm]{Corollary}
\theoremstyle{definition}
\theoremstyle{defbrakets}
\newtheorem{defn}{Definition}
\newtheorem*{plm-non}{Problem}
\DeclareFontFamily{OT1}{pzc}{}
\DeclareFontShape{OT1}{pzc}{m}{it}{<-> s * [1.10] pzcmi7t}{}
\DeclareMathAlphabet{\mathpzc}{OT1}{pzc}{m}{it}
\newcommand{\algrule}[1][.2pt]{\par\vskip.5\baselineskip\hrule height #1\par\vskip.5\baselineskip}
\title{\LARGE \bf
Meta Navigation Functions: Adaptive  Associations for Coordination of Multi-Agent Systems
}
\begin{document}

\title{Meta Navigation Functions: Adaptive  Associations for Coordination of Multi-Agent Systems}
\author{Matin~Macktoobian and Guillaume Ferdinand Duc}%
\date{The authors were with EPFL, Lausanne, Switzerland, at the time of this research. \\[2mm]Email:  matin.macktoobian@ualberta.ca}
\maketitle

\pagestyle{empty}

\begin{textblock}{14}(3,1)
	\noindent\textbf{\color{red}Published in ``American Control Conference 2022'' DOI: 10.23919/ACC53348.2022.9867570}
\end{textblock}

\begin{abstract}
In this paper, we introduce a new class of potential fields, i.e., meta navigation functions (MNFs) to coordinate multi-agent systems. Thanks to the MNF formulation, agents can contribute to each other's coordination via partial and/or total associations, contrary to traditional decentralized navigation functions (DNFs). In particular, agents may stimulate each other via their MNFs.  Moreover, MNFs need to be confined which is a weaker condition compared to the Morse condition of DNFs. An MNF is composed of a confined function and an attraction kernel. The critical points of the former can be confined in a safe region around a target critical point. The collision-free trajectory of an agent and its associations to its peers are governed by a confined function before reaching its safe region. Then, the attraction kernel drives the agent to its target in the safe region. MNFs provide faster coordination compared to DNFs. We illustrate how MNFs may exhibit some social behaviors in the course of partial and total associations among agents. Our simulations verify the efficiency of MNFs to coordinate complex swarms of agents.
\end{abstract}
\doublespacing
\section{Introduction}
The virtual-potential-based attraction/repulsion idea has been one of the most promising strategies for the coordination of multi-agent systems \cite{siciliano2016springer}. Accordingly, an agent and obstacles existing in its configuration space are attributed to a positive virtual potential, but the agent's target spot is marked by a negative one. Thus, the agent is eventually attracted by its target yet is repelled by the obstacles. Classic local minima problem \cite{warren1989global}, though, may rise in the case of complex multi-agent systems. Various methods have been proposed to bypass local minima issues such as harmonic potentials \cite{fahimi2009autonomous} and time-variant artificial potential fields \cite{macktoobian2016time}. However, a remarkable step to resolve the local minima issue was the emergence of navigation functions \cite{tanner2012multiagent}. 

A navigation function is basically an artificial potential field which meets four conditions the most important of which is Morse functionality \cite{palais2000generalized}. In short, a function is Morse if its critical points (CPs) are all non-degenerate. This assumption isolates CPs so that random perturbations easily destabilize those which are saddle and maximum points. Each agent may benefit from its own decentralized navigation function (DNF) \cite{de2006formation}. A DNF comprises a weighting factor whose clever setting drives all of the DNF's CPs, except the one representing the target point, toward obstacles. Thus, an obstacle's repulsive effect neutralizes the attraction of the CP located in its vicinity. Consequently, the agent is solely attracted by the desired CP.
\subsection{Motivation}
An agent coordinated by a DNF is in essence selfish because its attractive term only cares about its own convergence. This competitive manner of coordination is problematic in terms of completeness seeking in the coordination of various swarm robots such as astrobots \cite{macktoobian2021astrobotics,macktoobian2021data}. These swarms are used to generate the map of the observable universe to be used in the discovery of the dark energy nature. Namely, each astrobot is assigned to a particular target to receive its eminent rays from the space. So, the astrobot has to be coordinated while moving in a dense hexagonal formation of its peers. In this scenario, if an astrobot reaches its target, it permanently resides there. However, that astrobot may block its neighbors to reach their targets. Put differently, the convergence rate would generally not be maximized using (competitive) DNFs. In contrast, one may take cooperations among neighboring astrobots into account to tackle this issue. In particular, if an astrobot reaches its target yet blocking other peers, it may have to (at least temporarily) leave its target spot to give some space to its peers to cross that blocking point. Later, the astrobot may converge to its target spot again. Put differently, the coordination process is handled in a relatively unselfish framework. So, the idea of cooperation was proposed based on which the attractive term of the DNF associated with a particular agent also contributes to those of its neighboring peers \cite{macktoobian2019complete}. The generalization of such associations for mobile multi-agent systems is the topic of this paper.
\subsection{Methodology and Contributions}
This paper introduces the notion of meta navigation function (MNF) which includes a confined function and an attraction kernel only one of which governs the coordination of the corresponding agent at a time. A target spot is taken as a CP of an MNF instance. The coordination is done within two steps in two mutual regions. Namely, the safety region is the largest circular area with respect to the target CP. This region essentially comprises no obstacle. Moreover, the planning region is conceptually the configuration space of the problem from which the safety region is subtracted. The confined function first navigates the agent from its initial point to the CP which resides on the boundary of the safety region. To do so, one has to set the confinement factor of the confined function such that all of the CPs of the confined function reside in the safety region. In this step, all the potential collisions and associations between the agent and its peers are taken into account, and the attraction kernel is deactivated. Then, the agent is ready to enter the safety region in which there are no obstacles. The attraction kernel owns only one CP which is the target CP set for the confined function. Furthermore, the already-confined CPs of the confined function are invisible and neutral to the attraction kernel's functionality. Thus, once the attraction kernel is activated instead of the confined function, the agent moves toward its target CP in a direct line. Overall, the process described above features the following contributions to the coordination of multi-agent systems: \textit{Morse relaxation}, \textit{social behavioral implications}, and \textit{fast convergence}.
\section{Morse Relaxation through Criticality Confinement\protect\footnote{Throughout this paper, regular and bold symbols represent scalars and vectors, respectively.}}
\label{sec:mors-conf}
We focus on a general multi-agent system comprised of $N$ mobile
agents in $\mathbb{R}^{2}$ whose first-order dynamics read as
\begin{equation}
	\dot{q}_{i} = u_{i} \quad i = 1, \cdots, N,
\end{equation}
where $q_i$ and $u_i$ respectively denote the state and the control
input of agent $i$. Accordingly, a typical potential-based coordination method assigns each target spot to a CP.
\begin{defn}[Critical Set]
	Fix $W$ as a two-dimensional configuration space corresponding to the coordination problem of an agent. Let $\psi: \mathbb{R}^{2} \rightarrow \mathbb{R}^{\geqslant 0}$ be an arbitrary artificial potential field. Then, \textit{critical set} $\mathcal{C}$ is defined as below.
	\begin{equation}
	\mathcal{C} := \{\bm{q} \in W\mid\nabla \psi = \bm{0} \}
	\end{equation}
\end{defn}
\begin{defn}[DNF]
	Given configuration space $W$, let $\bm{q^{t}} \in W$ be the target point corresponding to a particular agent. Then, function $\psi: \mathbb{R}^{2} \rightarrow \mathbb{R}^{\geqslant 0}$ is a DNF associated with the agent if is simultaneously smooth, uniformly maximal on the boundary of its configuration space, minimum at $\bm{q} = \bm{q^{t}}\in \mathcal{C}$, and Morse.
\end{defn}
If the second derivative of the DNF associated with an agent vanishes with respect to a particular CP, the cited point is in fact a saddle point, thereby representing a local minima at which the agent may be trapped not to approach its desired minimum. Thus, the Morse condition is necessary since it guarantees the non-degeneracy of a DNF. In this section, we seek a weaker replacement for the Morse condition. For this purpose, we manage the local minima problem by the notion of criticality confinement. In particular, we plan to design a confined function the location of whose non-target CPs can be arbitrarily confined around a particular target CP. That confinement region enjoys the property of obstacle freeness, so once is reached, we deactivate the confined function, and an attraction kernel simply handles the final convergence phase of the coordination toward the target CP in the confinement region.
\begin{defn}[Boundary CP]
	Let $\mathcal{C}$ be the critical set associated with an artificial potential field. Then, \textit{boundary CP} $\bm{q^{\star}} \in \mathcal{C}$ corresponding to $\bm{q^{t}} \in \mathcal{C}$ is defined as follows.
	\begin{equation}
	\bm{q^{\star}} := \{\bm{q} \in \mathcal{C} \mid (\forall \bm{q'} \in \mathcal{C}) \norm{\bm{q^{\star}} - \bm{q^{t}}} > \norm{\bm{q} - \bm{q^{t}}} \}
	\end{equation}
\end{defn}
\begin{defn}[Critical Vector]
	Given a boundary CP $\bm{q^{\star}}$ corresponding to a target CP $\bm{q^{t}}$, \textit{critical vector} $\bm{v}$ with respect to $\bm{q^{t}}$ is defined as below.
	\begin{equation}
	\bm{v} := \bm{q^{t}} - \bm{q^{\star}}
	\end{equation}
\end{defn}
\begin{defn}[Critical Radius]
	Let $\bm{v}$ be a critical vector. Then, the critical radius is defined as the length of $\bm{v}$, say, $v = \norm{\bm{v}}$. 
\end{defn}
\begin{defn}[Critical Region]
	\label{defn:CritR}
	Let $\mathcal{C}$ be the critical set associated with an artificial potential field. Suppose $\bm{v}$ is the critical vector with respect to $\bm{q^{t}} \in \mathcal{C}$. Then, critical region $W_{|\bm{v}|} \subset W$ reads as follows.
	\begin{equation}
	W_{|\bm{v}|} := \{\bm{q} \in W \mid \norm{\bm{q^{t}}-\bm{q}}<\bm{|v|}\}
	\end{equation}
\end{defn}
\begin{defn}[Freeness Relation]
	Let $\mathcal{C}$ be the critical set associated with an artificial potential field. Given a $\bm{q^{t}} \in \mathcal{C}$, fix an $r \in \mathbb{R}^{\ge 0}$. If there is no object closer than distance $r$ to $\bm{q^{t}}$, then the binary relation $\mathcal{F}(\bm{q^{t}},r)$ holds.
\end{defn}
\begin{defn}[Confinement Radius]
	Let $\mathcal{C}$ be the critical set associated with an artificial potential field. Given a $\bm{q^{t}} \in \mathcal{C}$, the confinement radius $r$ with respect to $\bm{q^{t}}$ is defined as follows.
	\begin{equation}
	r := \max\{r' \mid (\forall r' \in \mathbb{R})~ \mathcal{F}(\bm{q^{t}},r')\}
	\end{equation}
\end{defn}
\begin{defn}[Confinement Region]
	\label{defn:ConfR}
	Let $\mathcal{C}$ be the critical set associated with an artificial potential field. Suppose $r$ is the confinement radius with respect to $\bm{q^{t}}$. Then, confinement region $W_{r} \subset W$ reads as follows.
	\begin{equation}
	W_{r} := \{\bm{q} \in W \mid \norm{\bm{q^{t}}-\bm{q}}<r\}
	\end{equation}
\end{defn}
Now, we define confined functions as below. 
\begin{defn}[Confined Function]
	Let $\mathcal{C}$ be the critical set associated with artificial potential field $\psi$. Given $\bm{q^{t}} \in \mathcal{C}$, let also $r$ be the confinement radius associated with $\bm{q^{t}}$. Then, if there exists a $0 < \delta \leqslant r$ such that
	$(\forall \bm{q} \in \mathcal{C}\setminus\{\bm{q^{t}}\})~\norm{\bm{q^{t}} - \bm{q}} < \delta
	$ holds, then $\psi$ is $(\bm{q^{t}},\delta)$-confined.
\end{defn}
We introduce a candidate confined function $\psi: \mathbb{R}^{2} \rightarrow \mathbb{R}^{\ge 0}$ as follows.
\begin{equation}
\label{eq:psi}
\psi(\bm{q_{i}};\alpha) :=  \overbrace{\rule{0pt}{6.8mm}\lambda_{1}\norm[\big]{\bm{q_{i}}-\bm{q^{t}}_{i}}^{2}}^{\text{attractive}} + \overbrace{\frac{\lambda_{2}}{\alpha}\sum_{\mathclap{\rule{0mm}{4mm} j\in[\mathcal{R}\setminus\{i\}]\dot{\cup}\mathcal{O}}}\dfrac{\norm[\big]{\bm{q_{i}}-\bm{q^{t}_{i}}}^{\frac{1}{\alpha}}}{\norm{\bm{q_{i}}-\bm{q_{j}}}^{2}\hfill}}^{\text{repulsive}}+\overbrace{\rule{0pt}{6.8mm}\lambda_{3}\norm[\big]{\bm{q_{i}}-\bm{q^{t}_{i}}}^{2}\displaystyle\sum_{\mathclap{k\in[\mathcal{R}\setminus\{i\}]}}\norm{\bm{q_{k}}-\bm{q^{t}_{k}}}^{2}}^{\text{associative}}
\end{equation}
This function includes three terms the first one of which represents the traditional attractive term of artificial potential fields. This term includes a real attraction factor $\lambda_{1} > 0$ to regulate the attraction force which is exclusively generated by the agent itself. The second term exhibits the repulsion functionality of the function in which $\mathcal{R}$ and $\mathcal{O}$ denote the set of all agents and the set of all obstacles associated with the coordination problem, respectively. The real repulsive factor $\lambda_{2} > 0$ may be used to tune the intensity of the repulsive force exerted to the agent. A real confinement factor $\alpha > 1$ is also taken into account in this term because, as we later see, a systematic selection of $\alpha$ gives rise to the desired confinement of the CPs of a candidate confined function with respect to a particular target CPt. We later observe that large confinement factors are those which can successfully confine CPs. The third term passes the associative impacts of other peers to the agent. In particular, the associative factor $\lambda_{3}$ regulates the amount of overall associative attraction applied to the agent. This expression transmits the attraction forces of other peers to the attractive dynamics of the agent. This consideration not only exhibits unselfishness of the agents but also generally makes the convergence of the agent faster to its target spot. On the other hand, one observes that the associative expression corresponding to each particular agent eventually vanishes when it reaches its target. This policy  first indicates that each agent's benevolence to other peers is not unlimited but restricted to the time it has not yet reached where it seeks. Second, this partial (un)selfish behavior does not postpone the overall convergence of multi-agent systems.

The theorem below presents a condition based on the confinement factor $\alpha$ whose fulfillment makes the candidate function (\ref{eq:psi}) a legit confined function. 
\begin{thm}
	Given an instance $\psi$ of the candidate confined function (\ref{eq:psi}), let $\mathcal{C}$ be the critical set corresponding to it. Let also $\bm{q^{t}} \in \mathcal{C}$ be the target CP associated with a coordination problem of a particular agent whose critical radius is $r$. Suppose $K$ and $J$ represent the index sets corresponding to the agent's peers and obstacles, respectively. Then, $\psi(\bm{q};\alpha)$ is $(\bm{q^{t}},r)$-confined for all $\alpha \geqslant \alpha^\dagger$ where $\alpha^\dagger$, known as the optimal confinement factor, is the solution to the following transcendental equation (called confinement condition)
	\begin{equation}
	\left[2(\lambda_{1}+ A\lambda_{3})\bm{v}\right]{\alpha^\dagger}^2 - [2\lambda_{2}\bm{C}]{\alpha^\dagger}r^{\frac{1}{\alpha^\dagger}} + [B\lambda_{2}r^{2}\bm{v}]r^{\frac{1}{\alpha^\dagger}} = \bm{0},
	\end{equation} 
	in which scalar parameters $A$ and $B$, and vector parameter $\bm{C}$ are defined, based on the boundary CP $q^{\star}$ associated with $\alpha = \alpha^\dagger$, as follows. 
	\begin{equation}
	A := \sum_{k \in K}\norm[\big]{\bm{q_{k}} - \bm{q_{k}^{t}}}^{2}, \quad B:= \sum_{j \in J}\frac{1}{\norm{\bm{q^{\star}} - \bm{q_{j}}}^3}, \bm{C} := \sum_{j \in J}\frac{\bm{q^{\star}} - \bm{q_{j}}}{\norm{\bm{q^{\star}} - \bm{q_{j}}}^4}
	\end{equation}
\end{thm}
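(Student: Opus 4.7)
The plan is to obtain the stated transcendental equation as the first-order stationarity condition $\nabla_{i}\psi = \bm{0}$ evaluated at a boundary CP $\bm{q^{\star}}$ sitting on the confinement sphere, and then to exploit monotonicity in $\alpha$ to upgrade this marginal equality into strict confinement for all $\alpha \geqslant \alpha^{\dagger}$. I would separate the argument cleanly into an algebraic derivation (yielding the equation exactly as stated) and an analytic monotonicity step (yielding the ``for all'' quantifier).

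For the algebraic derivation I would differentiate the three terms of (\ref{eq:psi}) individually with respect to $\bm{q_i}$. The attractive term contributes $2\lambda_{1}(\bm{q_i}-\bm{q_i^{t}})$ and the associative term contributes $2\lambda_{3} A\,(\bm{q_i}-\bm{q_i^{t}})$, where $A$ is precisely the peer-to-target squared-distance sum defined in the statement, so these two terms naturally combine into the $2(\lambda_{1}+A\lambda_{3})$ coefficient. The repulsive term, by the product rule applied to $\norm{\bm{q_i}-\bm{q_i^{t}}}^{1/\alpha}/\norm{\bm{q_i}-\bm{q_j}}^{2}$, splits into a radial piece proportional to $(\bm{q_i}-\bm{q_i^{t}})$ arising from differentiating the numerator (carrying the extra $1/\alpha$ that ultimately gives the $\alpha^{2}$ in the denominator after multiplying by $\lambda_{2}/\alpha$), plus an obstacle-directed piece proportional to $(\bm{q_i}-\bm{q_j})/\norm{\bm{q_i}-\bm{q_j}}^{4}$. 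Setting the sum to zero gives a vector balance with three distinct $\alpha$-dependencies.

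To pin down $\alpha^{\dagger}$ I would evaluate this balance at a boundary CP $\bm{q^{\star}}$ with $\norm{\bm{q^{\star}}-\bm{q^{t}}}=r$, the marginal case in which the critical radius just equals the confinement radius. The substitution $\bm{q^{\star}}-\bm{q^{t}} = -\bm{v}$ with $\norm{\bm{v}}=r$ makes every $\norm{\bm{q^{\star}}-\bm{q_i^{t}}}$ factor collapse to the appropriate power of $r$, and the obstacle-dependent sums condense into exactly the scalar $B$ and vector $\bm{C}$ introduced in the theorem. After multiplying through by $\alpha^{2}$ to clear denominators and regrouping the factors of $r^{1/\alpha}$, the balance rearranges term-by-term into the announced confinement condition, identifying $\alpha^{\dagger}$ as the threshold value at which the boundary CP lies exactly on $\partial W_{r}$.

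The main obstacle is the monotonicity step: showing that for every $\alpha>\alpha^{\dagger}$ the boundary CP is actually pushed strictly inside $W_{r}$. The key observation is that the repulsive contribution to $\nabla_{i}\psi$ is damped by the explicit prefactor $1/\alpha$ and by the factor $\norm{\bm{q_i}-\bm{q_i^{t}}}^{1/\alpha}$, both of which shrink as $\alpha$ grows (for distances exceeding unity under the usual normalisation of $W$), whereas the attractive and associative gradients, being linear in $(\bm{q_i}-\bm{q_i^{t}})$, are $\alpha$-independent. Hence the balance between attraction and repulsion in $\nabla_{i}\psi = \bm{0}$ is attained at strictly smaller $\norm{\bm{q_i}-\bm{q_i^{t}}}$ as $\alpha$ increases. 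I would formalise this either by implicit differentiation of the vector balance with respect to $\alpha$, showing $dv/d\alpha < 0$, or by a direct comparison argument on the radial component of the repulsive flux through the sphere of radius $r$. Once monotonicity is established, the boundary-CP definition forces every non-target element of $\mathcal{C}$ to satisfy $\norm{\bm{q}-\bm{q^{t}}} \leqslant \norm{\bm{q^{\star}}-\bm{q^{t}}} < r$, so $\psi$ is $(\bm{q^{t}},r)$-confined for all $\alpha \geqslant \alpha^{\dagger}$, as claimed.
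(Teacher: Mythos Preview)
Your algebraic derivation of the confinement condition is essentially identical to the paper's: differentiate the three terms of $\psi$, set $\nabla\psi(\bm{q^{\star}};\alpha)=\bm{0}$ at a boundary CP with $\norm{\bm{q^{\star}}-\bm{q^{t}}}=r$, collapse the $\norm{\bm{q^{\star}}-\bm{q^{t}}}$ factors to powers of $r$, absorb the obstacle sums into $B$ and $\bm{C}$, and clear the $1/\alpha^{2}$ by multiplying through. Nothing to add there.

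Where you diverge from the paper is the ``for all $\alpha\geqslant\alpha^{\dagger}$'' step. You propose proving genuine monotonicity of the critical radius in $\alpha$, either by implicit differentiation ($dv/d\alpha<0$) or by a radial-flux comparison. The paper instead argues via a single limit: it computes $\lim_{\alpha\to\infty}\nabla\psi(\bm{q};\alpha)$, observes that the repulsive contribution vanishes entirely so the only surviving CP is $\bm{q^{t}}$ (critical radius zero), and then asserts that confinement therefore holds for every $\alpha$ in $[\alpha^{\dagger},\infty)$. Your route is more work but also more honest: the paper's limit argument, as written, shows only what happens at the two endpoints $\alpha=\alpha^{\dagger}$ and $\alpha=\infty$ and tacitly relies on exactly the monotonicity you set out to prove in order to interpolate. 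Conversely, your monotonicity sketch leans on the parenthetical ``for distances exceeding unity under the usual normalisation of $W$'' to get $r^{1/\alpha}$ decreasing in $\alpha$, which is an extra hypothesis the paper never states; if $r<1$ that factor moves the wrong way and you would need the $1/\alpha$ prefactor to dominate. So each approach buys simplicity at the cost of a gap the other tries to close.
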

\begin{proof}
	$\alpha$ has to be chosen such that all of the non-target CPs reside in the confinement region. As a lower bound of $\alpha$, suppose that taking $\alpha^\dagger$ into account yields the boundary CP residing at least on the boundary of the confinement region (and obviously remainder of the CPs inside it) which formally renders to $v = r$, meaning 
	\begin{equation}
	\label{eq:condition}
	\norm[\big]{\bm{q^{t}} - \bm{q^{\star}}} = r.
	\end{equation}
	Note that since $\bm{q^{\star}}$ is a CP, we must have $\nabla\psi(\bm{q^{\star}};\alpha^\dagger) = \bm{0}$. The expansion of the quoted equation gives
	\begin{equation}
	\label{eq:deriv}
	2(\bm{q^{\star}} - \bm{q^{t}})\lambda_{1} +
	\frac{\lambda_{2}}{\alpha^\dagger}\sum_{j \in J}\Big[\frac{\frac{1}{\alpha^\dagger}(\bm{q^{\star}} - \bm{q^{t}})\norm{\bm{q^{\star}} - \bm{q^{t}}}^{\frac{1}{\alpha^\dagger}} }{\norm{\bm{q^{\star}} - \bm{q_{j}}}^{4}}- \frac{2(\bm{q^{\star}} - \bm{q_{j}})\norm{\bm{q^{\star}} - \bm{q^{t}}}^{\frac{1}{\alpha^\dagger}} }{\norm{\bm{q^{\star}} - \bm{q_{j}}}^{4}}\Big]+
	 2(\bm{q^{\star}} - \bm{q^{t}})\lambda_{3}\sum_{k \in K}\norm[\big]{\bm{q_{k}} - \bm{q_{k}^{t}}}^{2} = \bm{0}.
	\end{equation}
	Applying (\ref{eq:condition}) to the equation above yields
	\begin{equation}
	2\bm{v}\left[\lambda_{1}+ \lambda_{3}\sum_{k \in K}\norm[\big]{\bm{q_{k}} - \bm{q_{k}^{t}}}^{2}\right] + 
	\frac{\lambda_{2}}{{\alpha^\dagger}^2}\left[(\bm{q^{\star}} - \bm{q^{t}})\norm{\bm{q^{\star}} - \bm{q^{t}}}^{\frac{1}{\alpha^\dagger}-2}\right]\left[\sum_{j \in J}\frac{1}{\norm{\bm{q^{\star}} - \bm{q_{j}}}^3}\right]-\frac{2\lambda_{2}}{\alpha^\dagger}\left[\norm{\bm{q^{\star}} - \bm{q^{t}}}^{\frac{1}{\alpha^\dagger}}\right]\left[\sum_{j \in J}\frac{\bm{q^{\star}} - \bm{q_{j}}}{\norm{\bm{q^{\star}} - \bm{q_{j}}}^4}\right]= \bm{0}.
	\end{equation}
	We use the theorem's auxiliary parameters and multiply the resulting equation by ${\alpha^\dagger}^2$. So, we obtain
	\begin{equation}
	\left[2(\lambda_{1}+ A\lambda_{3})\bm{v}\right]{\alpha^\dagger}^2 - [2\lambda_{2}\bm{C}]{\alpha^\dagger}r^{\frac{1}{\alpha^\dagger}} + [B\lambda_{2}r^{2}\bm{v}]r^{\frac{1}{\alpha^\dagger}} = \bm{0},
	\end{equation}
	whose solution, i.e., $\alpha^\dagger$ is the lower bound of $\alpha$ according to which $\psi(\bm{q};\alpha)$ is $(\bm{q^{t}},r)$-confined in a guaranteed manner. 
	
	Now, it remains to show that any $\alpha > \alpha^\dagger$ holds the cited confinement, as well. Note that
	\begin{equation}
	\lim\limits_{\alpha \to \infty} \nabla\psi(\bm{q};\alpha) = (\bm{q^{\star}} - \bm{q^{t}}) \left[2\lambda_{1} +
	2\lambda_{3}\sum_{k \in K}\norm[\big]{\bm{q_{k}} - \bm{q_{k}^{t}}}^{2}\right].
	\end{equation}
	The limit gradient above only possesses one single CP $\bm{q} = \bm{q^{t}}$, that is, the critical radius is zero. Therefore, for any $\alpha^\dagger \leqslant \alpha<\infty$, $\psi(\bm{q};\alpha)$ is also $(\bm{q^{t}},r)$-confined, which is the claim of the theorem.
\end{proof}
Smoothness and admissibility are two requirements of a valid DNF both of which are the inherent properties of a confined function, as well. In particular, the control signal derived from any potential function used for the purpose of coordination is its weighted derivative. So, it has to be at least $\mathpzc{C}^2$. Moreover, a DNF has to be admissible on its domain. The configuration space of a coordination problem is often assumed to be circular and bounded \cite{de2006formation}. As well, the boundary maximality of DNFs comes from the fact that the boundary of their configuration space are often considered as obstacles. We similarly take the same assumption into account for MNFs. Thus, the uniform boundary maximality is automatically fulfilled for them. In particular, it only demands the non-degeneracy of its target CP.
\begin{cor}
	Given a critical radius $r$, let $\psi(\bm{q};\alpha)$ be a $(\bm{q^{t}},r)$-confined function corresponding to $\bm{q^{t}}$ target CP and confinement factor $\alpha$. Then, $\bm{q^{t}}$ is non-degenerate.
\end{cor}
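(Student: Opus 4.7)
The plan is to verify non-degeneracy of $\bm{q^{t}}$ by showing that the Hessian $\nabla^{2}\psi(\bm{q^{t}};\alpha)$ is non-singular. Since $\psi$ splits additively into attractive, repulsive, and associative pieces, I would compute each piece's Hessian contribution at $\bm{q^{t}_{i}}$ separately and then combine them.

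First I would differentiate the attractive term $\lambda_{1}\norm{\bm{q_{i}} - \bm{q^{t}_{i}}}^{2}$ twice to obtain a contribution $2\lambda_{1} I$ (with $I$ the identity on $\mathbb{R}^{2}$), which is strictly positive definite. Next, since the only $\bm{q_{i}}$-dependence of the associative term $\lambda_{3}\norm{\bm{q_{i}} - \bm{q^{t}_{i}}}^{2}\sum_{k \in K}\norm{\bm{q_{k}} - \bm{q_{k}^{t}}}^{2}$ is through the leading factor, its Hessian contribution is $2\lambda_{3} A\, I$ with $A$ as in the preceding theorem. Together these two pieces already yield a block $2(\lambda_{1}+\lambda_{3} A) I$ whose determinant is $4(\lambda_{1}+\lambda_{3}A)^{2}>0$, so every direction at $\bm{q^{t}}$ inherits a strictly positive quadratic curvature from the attractive-plus-associative contribution alone.

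The main obstacle is the repulsive term, whose factor $\norm{\bm{q_{i}} - \bm{q^{t}_{i}}}^{1/\alpha}$ with $1/\alpha \in (0,1)$ fails to be classically twice differentiable at $\bm{q^{t}_{i}}$, so one cannot naively plug the target into a second derivative. To handle this I would invoke the asymptotic identity already derived in the preceding theorem's proof, $\lim_{\alpha\to\infty}\nabla\psi(\bm{q};\alpha)=2(\lambda_{1}+\lambda_{3} A)(\bm{q}-\bm{q^{t}})$, which shows that the repulsive gradient contribution is squeezed out in the confinement limit. Differentiating this limit gradient returns the Jacobian $2(\lambda_{1}+\lambda_{3}A) I$, which is non-singular; by continuity of this spectrum in $\alpha$ above $\alpha^{\dagger}$, together with the non-negativity of the repulsive term (so that wherever the second derivative exists it only adds a positive semidefinite bend around $\bm{q^{t}}$), the effective Hessian remains non-singular for every admissible $\alpha \geq \alpha^{\dagger}$. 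This establishes non-degeneracy of $\bm{q^{t}}$ and thus validates the Morse relaxation claimed by the surrounding discussion.
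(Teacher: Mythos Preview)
The paper states this corollary without proof, apparently regarding it as immediate from the preceding discussion, so there is nothing in the source to benchmark your argument against; your write-up has to stand on its own merits.

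Your handling of the repulsive term contains a genuine gap. You already note that $\norm{\bm{q}-\bm{q^{t}}}^{1/\alpha}$ with $1/\alpha\in(0,1)$ is not classically twice differentiable at $\bm{q^{t}}$, but your proposed fix fails on two counts. First, the claim that the repulsive piece ``only adds a positive semidefinite bend around $\bm{q^{t}}$'' is false: for $p\in(0,1)$ the Hessian of $\norm{\bm{x}}^{p}$ at $\bm{x}\neq\bm{0}$ has eigenvalues $p\,\norm{\bm{x}}^{p-2}$ (tangential) and $p(p-1)\,\norm{\bm{x}}^{p-2}$ (radial), and the radial eigenvalue is strictly negative and diverges to $-\infty$ as $\bm{x}\to\bm{0}$. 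Near the target this indefinite contribution overwhelms the $2(\lambda_{1}+\lambda_{3}A)I$ block in the radial direction, so nonnegativity of the repulsive value gives you no convexity. Second, and consequently, the classical Hessian of $\psi$ simply does not exist at $\bm{q^{t}}$ for any finite $\alpha>1$, so your ``continuity of the spectrum in $\alpha$ above $\alpha^{\dagger}$'' has no object to be continuous; the limit $\alpha\to\infty$ borrowed from the theorem is precisely the regime in which the offending exponent vanishes and tells you nothing about a fixed admissible $\alpha$. If you want to salvage the corollary, argue instead that $\bm{q^{t}}$ is an isolated strict minimum---which follows at once from $\psi\geq 0$ with equality only at $\bm{q}=\bm{q^{t}}$ in (\ref{eq:psi})---rather than attempting a Hessian computation that the candidate function cannot support at the target.
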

\section{Meta Navigation Functions}
\label{sec:MNF}
We illustrated how a confined function confines its CPs in a confinement region around a particular target CPt. A confined function plans the path of its corresponding agent from an initial point to the boundary critical state located on (or inside) the confinement region. Thus, another potential entity, i.e., attraction kernel, is required to be activated instead of the confined function to finalize the coordination by reaching the desired target CP.
\begin{defn}[Attraction Kernel]
	\label{defn:kernel}
	Given a critical radius $r$, let $\psi(\bm{q};\alpha)$ be a $(\bm{q^{t}},r)$-confined function corresponding to the coordination of an agent. Let also $W_{r}$ be the confinement region associated with $\psi(\bm{q};\alpha)$. Then, \textit{attraction kernel} $\omega: W_{r} \rightarrow \mathbb{R}^{\geqslant 0}$ with respect to $\bm{q^{t}}$ is defined as
	\begin{equation}
	\omega(\bm{q}) := \beta\norm{\bm{q} - \bm{q^{t}}}.
	\end{equation} 
	Here, real $\beta > 0$ is a kernel factor.
\end{defn}  
\begin{defn}[Meta Navigation Function]
	Let $\psi(\bm{q};\alpha)$ be a $(\bm{q^{t}},\alpha)$-confined function corresponding to the coordination of an agent where $\bm{q^{t}}$ is a target CP associated with $\psi(\bm{q};\alpha)$. Let also $W_{r}$ be the confinement region associated with $\psi(\bm{q};\alpha)$. Denote by $\omega(\bm{q})$ an attraction kernel with respect to $\bm{q^{t}}$. Then, \textit{MNF} $\phi$ is defined as below.
	\begin{equation}
	\phi(\bm{q};\alpha) := 
	\begin{cases}
	\psi(\bm{q};\alpha) & \text{ if } \bm{q} \not\in W_{r}\\
	\omega(\bm{q}) & \text{otherwise}
	\end{cases}
	\end{equation}
	Let $\mathcal{P}$ be a predicate $\mathcal{P}: W_{r} \rightarrow \{0,1\}$ which can always be identified with the corresponding subset 
	\begin{equation}
	W_{r}^{\mathcal{P}} := \{\bm{q} \in W_{r} \mid  \mathcal{P}(\bm{q}) = 1 \}.
	\end{equation}
	Thus, MNF definition can be summarized as follows.
	\begin{equation}
	\phi(\bm{q};\alpha) := \psi(\bm{q};\alpha)\overline{\mathcal{P}(\bm{q})} + \omega(\bm{q})\mathcal{P}(\bm{q})
	\end{equation}
\end{defn}
\begin{cor}
	\label{prop:fast}
	The fastest coordination of agent $i$ corresponds to $\alpha = \alpha^\dagger$.
\end{cor}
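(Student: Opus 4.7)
The plan is to split the total coordination time of agent $i$ into the two consecutive phases dictated by the MNF definition: a confined-function phase while $\bm{q}_{i} \notin W_{r}$, and the attraction-kernel phase while $\bm{q}_{i} \in W_{r}$. Because $-\nabla\omega$ has constant magnitude $\beta$ by Definition~\ref{defn:kernel}, the kernel phase completes in the fixed time $r/\beta$, independently of $\alpha$. Proving the corollary therefore reduces to minimizing the duration of the confined phase over the admissible set $\alpha \geqslant \alpha^{\dagger}$ supplied by the preceding theorem.

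Next, I would analyze the dependence of $-\nabla\psi(\bm{q}_{i};\alpha)$ on $\alpha$ at a fixed query point outside $W_{r}$. Differentiating (\ref{eq:psi}) as in (\ref{eq:deriv}) and projecting along the radial direction $\bm{q}^{t}-\bm{q}_{i}$, the attractive and associative terms contribute the $\alpha$-independent coefficient $2(\lambda_{1}+A\lambda_{3})$, while the repulsive term contributes both an extra radial attractive coefficient proportional to $\lambda_{2}/\alpha^{2}$ (coming from differentiating $\norm{\bm{q}_{i}-\bm{q}^{t}}^{1/\alpha}$) and a non-radial obstacle-aligned component proportional to $\lambda_{2}/\alpha$. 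Under the fixed obstacle geometry, the radial attractive coefficient is monotonically decreasing in $\alpha$ on $[\alpha^{\dagger},\infty)$, so the pointwise toward-target speed is maximized at the smallest admissible $\alpha$, namely $\alpha=\alpha^{\dagger}$.

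To conclude, I would convert this pointwise dominance into a comparison on the first-hitting time of $\partial W_{r}$. Projecting $\dot{\bm{q}}_{i}=-\nabla\psi$ onto the line from $\bm{q}^{t}$ to $\bm{q}_{i}$ yields a scalar ODE for $\norm{\bm{q}_{i}-\bm{q}^{t}}$ whose drift is strictly more negative at $\alpha=\alpha^{\dagger}$; a standard Gronwall-type comparison then gives that $\partial W_{r}$ is reached no later under $\alpha^{\dagger}$ than under any larger admissible $\alpha$. Combined with the $\alpha$-invariance of the kernel phase, this proves the corollary. The main obstacle lies precisely in this final step: the non-radial obstacle-aligned component also grows as $\alpha$ decreases, so one must argue that its deflection of the trajectory does not offset the stronger radial pull. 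Away from obstacles this component is a bounded perturbation and the comparison is clean; near narrow passages a finer level-set analysis is required, but the monotonicity of the radial coefficient in $\alpha$ is preserved throughout, yielding the claim.
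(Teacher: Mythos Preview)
The paper supplies no proof for this corollary; it is stated immediately after the MNF definition and left unargued. There is therefore nothing in the paper to compare your proposal against.

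On its own merits: your decomposition into a kernel phase of fixed duration $r/\beta$ (since $\norm{\nabla\omega}\equiv\beta$ and the switch always occurs on $\partial W_r$) and an $\alpha$-dependent confined phase is correct and is the natural way to attack the claim. The gap you flag at the end is genuine, however, and is not closed. A pointwise comparison of the radial component of $-\nabla\psi$ does not control the first-hitting time of $\partial W_r$: as $\alpha$ decreases toward $\alpha^{\dagger}$ the obstacle-aligned repulsion (scaling as $\lambda_{2}\alpha^{-1}\norm{\bm{q}-\bm{q}^{t}}^{1/\alpha}$) also strengthens, and the resulting tangential deflection can lengthen the actual path even while the radial coordinate decreases faster. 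Your Gronwall step bounds only $\norm{\bm{q}_{i}-\bm{q}_{i}^{t}}$, not the arc length traversed, and the ``finer level-set analysis'' you defer to is exactly the missing ingredient. Without additional geometric hypotheses (e.g.\ obstacles well separated from the straight segment into $W_{r}$, or a scale normalization ensuring $\norm{\bm{q}-\bm{q}^{t}}>1$ outside $W_{r}$ so that the radial coefficient is genuinely monotone in $\alpha$), the corollary is a heuristic observation rather than a proved statement, and your outline---while considerably more substantive than what the paper offers---does not change that status.
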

Finally, the MNF-based Coordination of multi-agent systems may be achieved using the algorithm depicted in Fig. \ref{fig:coord}, in which GDC$()$ is a standard gradient descent computer.

\begin{figure*}[h]
	\centering
	\begin{subfigure}[b]{0.2\textwidth}
		\centering
		\hspace*{-2.7cm}\includegraphics[scale=0.33]{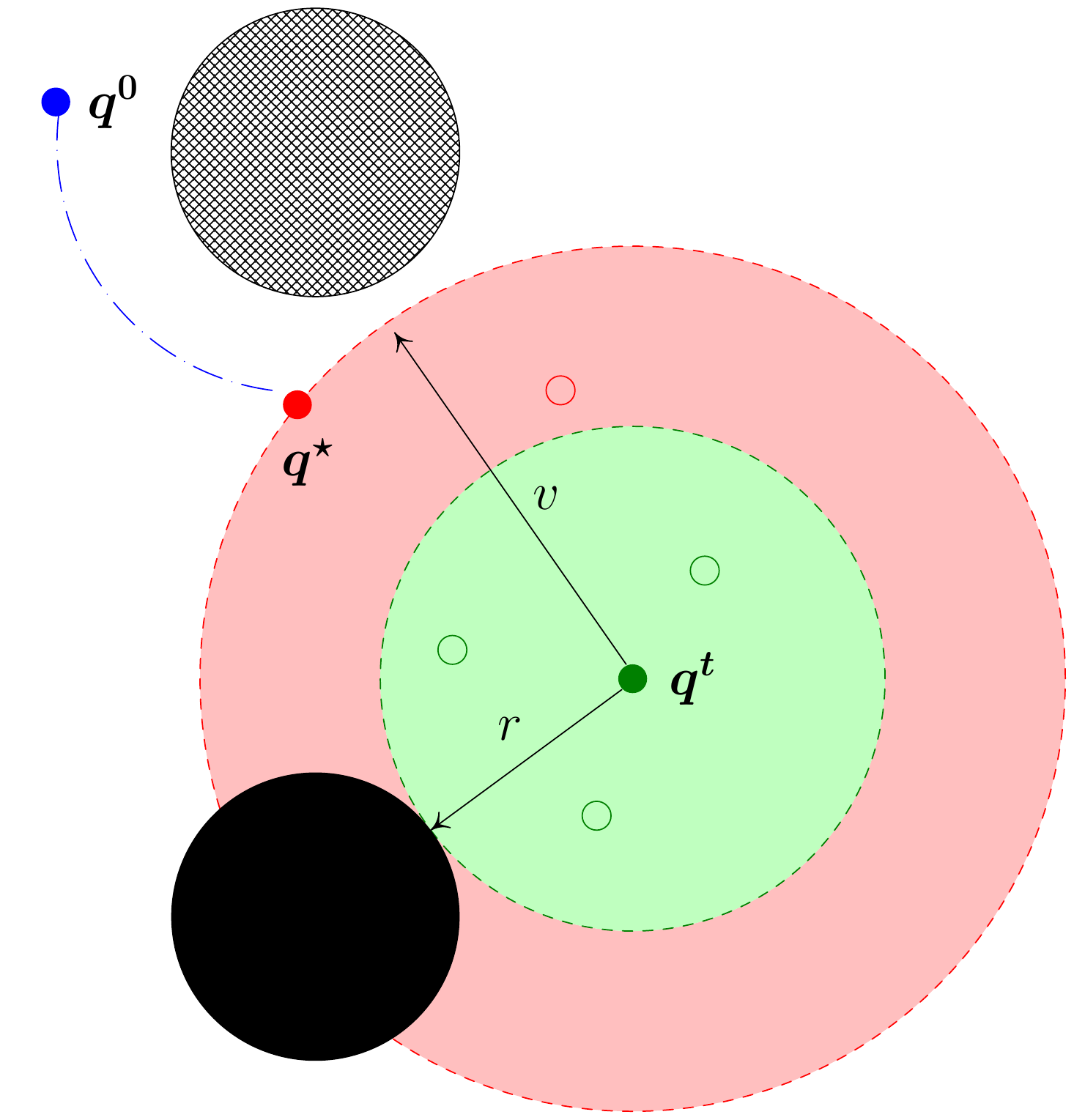}
		\caption[]%
		{{\small Before $\alpha$ determination}}    
		\label{fig:1}
	\end{subfigure}
	\begin{subfigure}[b]{0.2\textwidth}   
		\centering 
		\hspace*{-2cm}\includegraphics[scale=0.33]{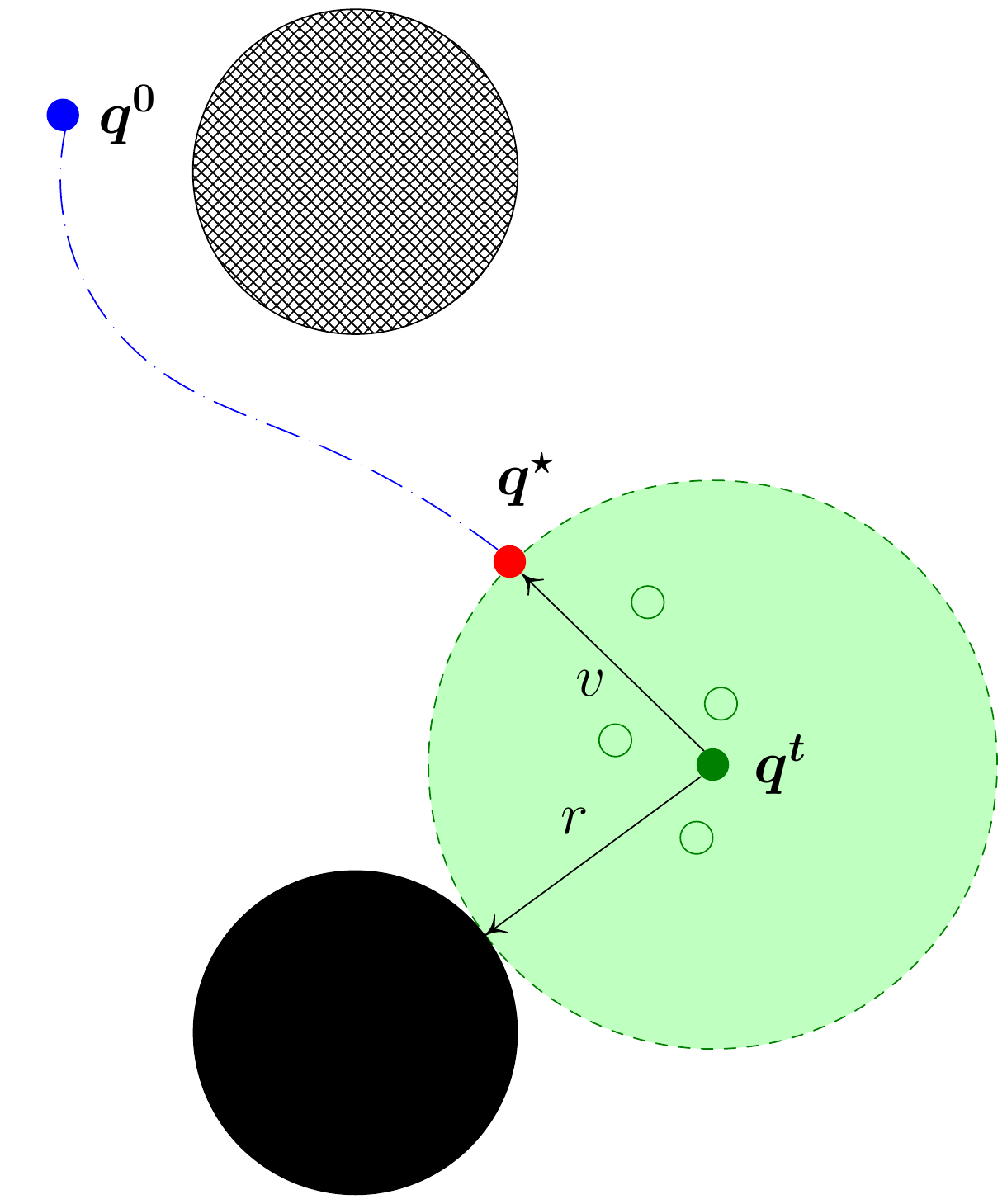}
		\caption[]%
		{{\small Coordination outside $W_{r}$}}    
		\label{fig:2}
	\end{subfigure}
	\begin{subfigure}[b]{0.2\textwidth}   
		\centering 
		\hspace*{-1.5cm}\includegraphics[scale=0.33]{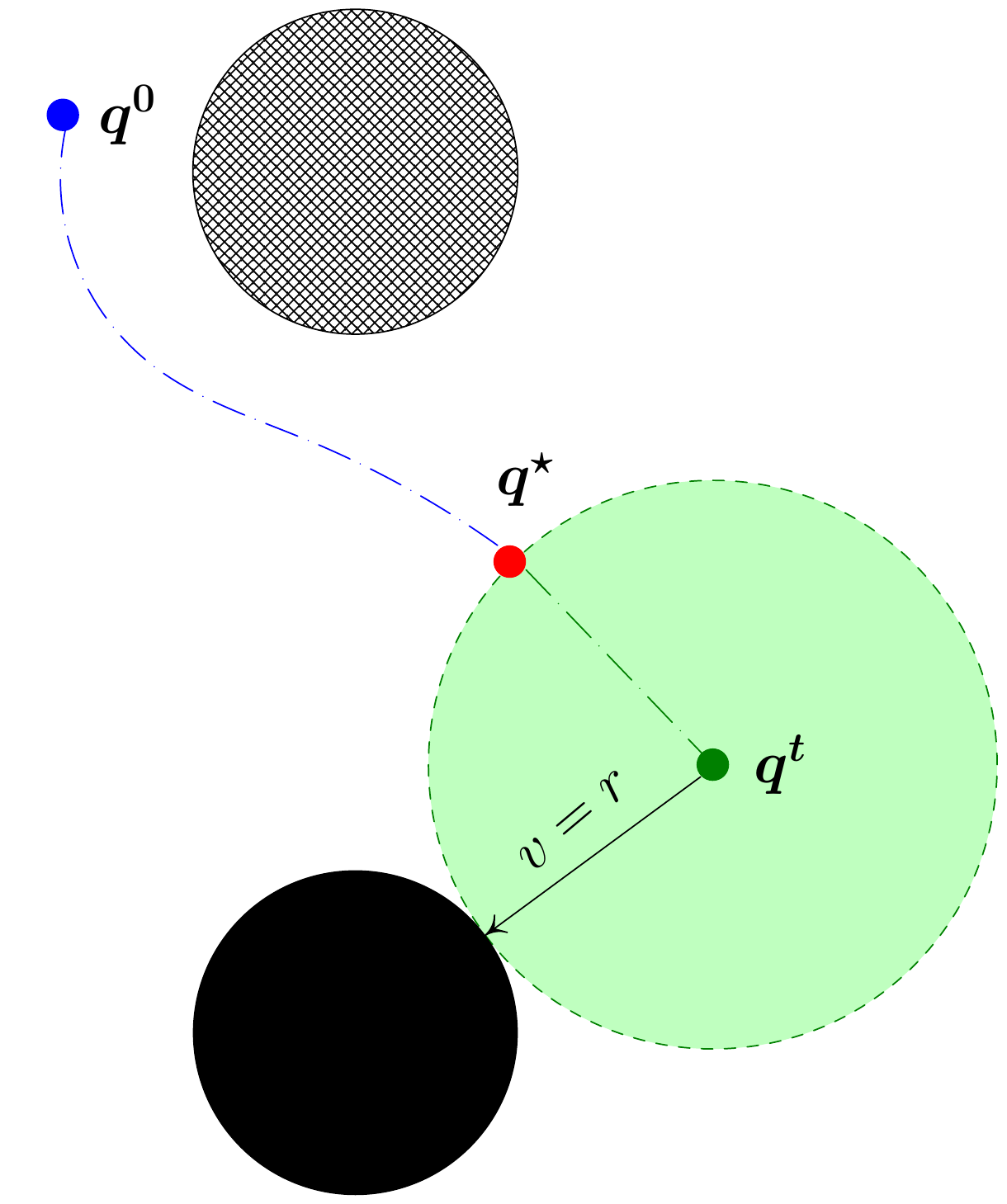}
		\caption[]%
		{{\small Coordination inside $W_{r}$}}    
		\label{fig:3}
	\end{subfigure}
	\begin{subfigure}[b]{0.2\textwidth}  
		\hspace*{-6mm}\raisebox{5mm}{\includegraphics[scale=0.52]{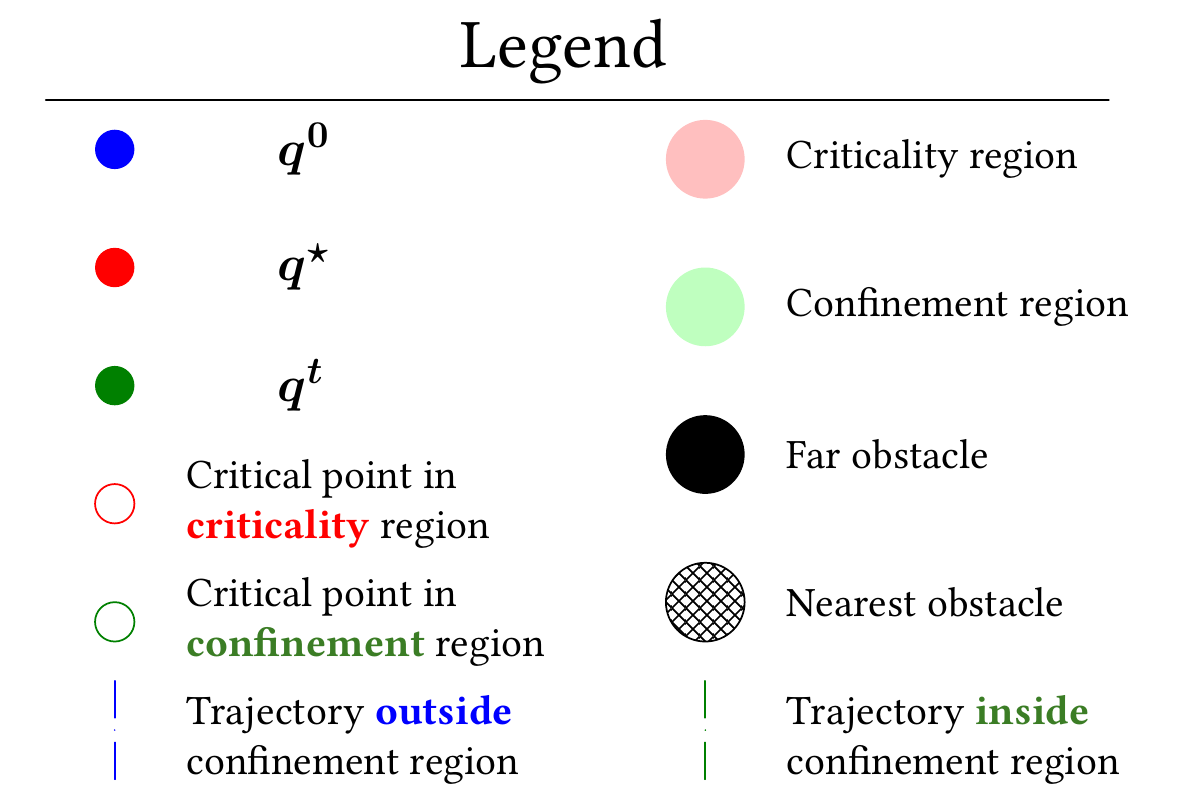}}
	\end{subfigure}
	\caption[]
	{\small The coordination of an agent using an MNF} 
	\label{fig:coord}
\end{figure*}
\section{Simulations}
\label{sec:sim}
\subsection{Total Associations}
We already stated that agents may be completely unselfish, that is, each agent totally contributes to the associative terms of the MNFs corresponding to all of its peers. As the first scenario, we take a coordination problem including 6 agents and 4 obstacles. The agents seek their target spots in the course of such total associations. Fig. \ref{fig:ttrj} illustrates the computed trajectories corresponding to each agent. One observes that the desired convergence are fully achieved. Moreover, the two-phase dynamics of MNF-based coordination are visible. In particular, the confined function of each agent first transfers it to its boundary CP associated with its optimal confinement factor. Then, a straight trajectory toward its target in the agent's confinement region is computed using its attraction kernel. The potential variations of the agents are depicted in Fig. \ref{fig:tpot} in which all potentials are completely vanished at target points.
\begin{algorithm}[tb]
	\caption{MNF Coordinator}
	\label{alg:MNFC} 
	Initial positions vector $\bm{\mathcal{Q}^{0}} := \{\bm{q_{i}^{0}}|i\in\mathcal{I} \}$\\
	Initial Target positions vector $\bm{\mathcal{Q}^{t}} := \{\bm{\bm{q_{i}^{t}}}|i\in\mathcal{I} \}$\\
	Obstacle positions vector $\bm{\mathcal{O}} := \{o_{j}|j\in\mathcal{J} \}$\\
	Weighting factors $\lambda_{1}, \lambda_{2}, \lambda_{3}, \beta$\\
	Confinement factor $\alpha$\\
	Motion step size factor $\gamma$\\
	MNF $\phi$\\
	\KwResult{Trajectories vector $\mathcal{Z}$ (including a sequence of points corresponding to each agent leading to its target point from its initial point)}
	\algrule[1pt]
	$\bm{\mathcal{Q}} \leftarrow \bm{\mathcal{Q}^{0}$}\\ $\mathcal{Z} \leftarrow \varnothing$\\pass $(\bm{\mathcal{Q}^{T}},\bm{\mathcal{O}}, \lambda_{1}, \lambda_{2}, \lambda_{3}, \alpha)$ to $\phi$\\
	\ForEach{agent $i$}{
		$\alpha^\dagger_{i} \leftarrow $ solve (\ref{eq:condition})\\
		\While{$q_{i} \not\in W_{r}^{i}$}{
			$\mathcal{Z}_{i} \leftarrow \text{GDC}(\psi_{i},q_{i}^{0},q_{i}^{\star},\gamma)$
		}
		\While{$q_{i} \in W_{r}^{i}$}{
			$\mathcal{Z}_{i} \leftarrow \mathcal{Z}_{i}~ \dot{\cup}~ \text{GDC}(\omega_{i},q_{i}^{0},q_{i}^{t},\gamma)$
		}
		$\mathcal{Z} \leftarrow \mathcal{Z} ~\dot{\cup}~ \mathcal{Z}_{i}$
	}
	\Return $\mathcal{Z}$
\end{algorithm}

To compare the performance of DNF and MNF in terms of convergence speed, we define a particular temporal measure. Namely, given a coordination scenario including some agents and two instances of MNF and DNF, let $\kappa_{\text{DNF}}^{i}$ and $\kappa_{\text{MNF}}^{i}$ be the required time for the agent $i$ to reach its target from its initial point using the DNF and the MNF, respectively. Thus convergence factor $\kappa$ reads
\begin{equation}
\kappa := \frac{\max\limits_{i} \kappa_{\text{MNF}}^{i}}{\max\limits_{i} \kappa_{\text{DNF}}^{i}}.
\end{equation}
This factor overall expresses the relative convergence speed with respect to the cited functions. Table \ref{tbl:TA} presents 4 simulated scenarios with various numbers of agents. The density of each scenario\footnote{The density of a multi-agent coordination scenario is defined as the overall number of the agents and obstacles divided by the area of the scenario's configuration space. This factor basically measures how much space agents have to be coordinated.} is increased gradually to study how MNFs react to more intensive collision-prone scenarios compared to the DNF. First, the convergence factor $\kappa$ is strictly less than one in all scenarios. So, one observes that coordination using MNFs are faster than those which are governed by DNFs. In particular, the more dense system we have, the faster MNFs are compared to DNF. Second, the more dense a coordination scenario is, the smaller the associative weighting factor $\lambda_{3}$ corresponding to agents has to be as a safety requirement. Namely, the associative stimulations applied to an agent may be hazardous in the situations in which the agent is often very close to obstacles. Thus, smaller associative factors are recommended in dense coordination problems. The optimal confinement factors corresponding to each scenario are reported, as well.
\begin{table}
	\centering
	\caption{Total Associations}
	\begin{tabular}
		[t]{>{\centering}m{1.3cm}>{\centering}m{1.3cm}>{\centering}m{0.8cm}>{\centering}m{0.8cm}>{\centering}m{1cm}>{\centering\arraybackslash}m{1cm}}
		\toprule
		\vspace*{-5mm}
		\multirowcell{5}{Configuration\\ Setups}&Agents&20&30&80&100\\
		&Obstacles&12&21&40&50\\
		\cmidrule(lr){3-4}
		\cmidrule(lr){5-6}
		&Area &\multicolumn{2}{c}{30$\times$15}&\multicolumn{2}{c}{40$\times$25}\\
		\midrule
		\multirowcell{4}{Potential\\Setups}&$\beta$ &\multicolumn{4}{c}{10}\\&$\lambda_1$&\multicolumn{4}{c}{0.4}\\
		&$\lambda_2$ &12&13&14&15\\
		&$\lambda_3$ &0.001&0.0005&0.0002&0.0001\\
		\midrule
		\multirowcell{2}{Results}&$\kappa$&0.734&0.681&0.593&0.502\\
		&$\alpha^\dagger$ &5.396&7.410&8.198&7.364\\
		\bottomrule
	\end{tabular}
	\label{tbl:TA}
\end{table}%
\subsection{Partial Associations}
We investigate the influence of partial associations on the coordination outcomes. We first analyze a partially associative version of the example presented in the previous section. In particular, we divide the agents into two coalitions, say, robots 1, 3, and 5, versus robots 2, 4, and 6. The computed trajectories and potential variations of this scenario are reflected in Fig. \ref{fig:ptrj} and \ref{fig:ppot}, respectively. The convergence times are trivially longer than those of the total associative case. This observation is justified by the fact that associative terms of MNFs in the partial case are smaller than those of the total case.

Given a fixed set of agents, obstacles, and swarm density, we consider various numbers of coalitions corresponding to this swarm. Namely, we uniformly partition agents into coalitions as expressed in Table \ref{tbl:PA}. One remarks that the convergence factor decreases as the number of coalitions increases because when less coalitions are taken into account, each agent is supported by more allies compared to the case in which the agent is affiliated with a smaller coalition.
\begin{table}[]
	\centering
	\caption{Partial Associations}
	\begin{tabular}[t]{>{\centering}m{1.3cm}>{\centering}m{1.3cm}>{\centering}m{0.8cm}>{\centering}m{0.8cm}>{\centering}m{1cm}>{\centering\arraybackslash}m{1cm}}
		\toprule
		\vspace*{-5mm}
		\multirowcell{7}{Configuration\\ Setups}&Agents&\multicolumn{4}{c}{100}\\
		&Obstacles&\multicolumn{4}{c}{50}\\
		\cmidrule(lr){3-6}
		&Area &\multicolumn{4}{c}{40$\times$25}\\
		\cmidrule(lr){3-6}
		&Coalitions&5&10&20&50\\
		\midrule
		\multirowcell{4}{Potential\\Setups}&$\beta$ &\multicolumn{4}{c}{10}\\&$\lambda_1$&\multicolumn{4}{c}{0.4}\\
		&$\lambda_2$ &12&13&14&15\\
		&$\lambda_3$ &0.001&0.0005&0.0002&0.0001\\
		\midrule
		\multirowcell{1}{Results}&$\kappa$&0.522&0.47&0.602&0.689\\
		\bottomrule
	\end{tabular}
	\label{tbl:PA}
\end{table}%

The applied simulations also reveal the impact of agents' initial conditions on how partial associative scenarios may tend to behave like either DNF-based (selfish and isolated agents) or total MNF-based swarms. In particular, when agents are relatively close to their targets, their contributions to the associative terms of their peers' MNFs are not noticeable. In other words, a partially associative MNF resembles the behavior of DNFs if its corresponding multi-agent system is already fairly coordinated. In a social perspective, an agent which is far from its target may be isolated from its peers, if they are extremely closer to their targets compared to its distance to its own target. In contrast, if the majority of agents are relatively far from their destinations, then the agents will be stimulated by each other's peers for a longer time. In this case, the dynamics of the partially associative MNFs more resembles a totally associative one rather than a DNF.
\begin{figure}
	\centering\includegraphics[scale=0.5]{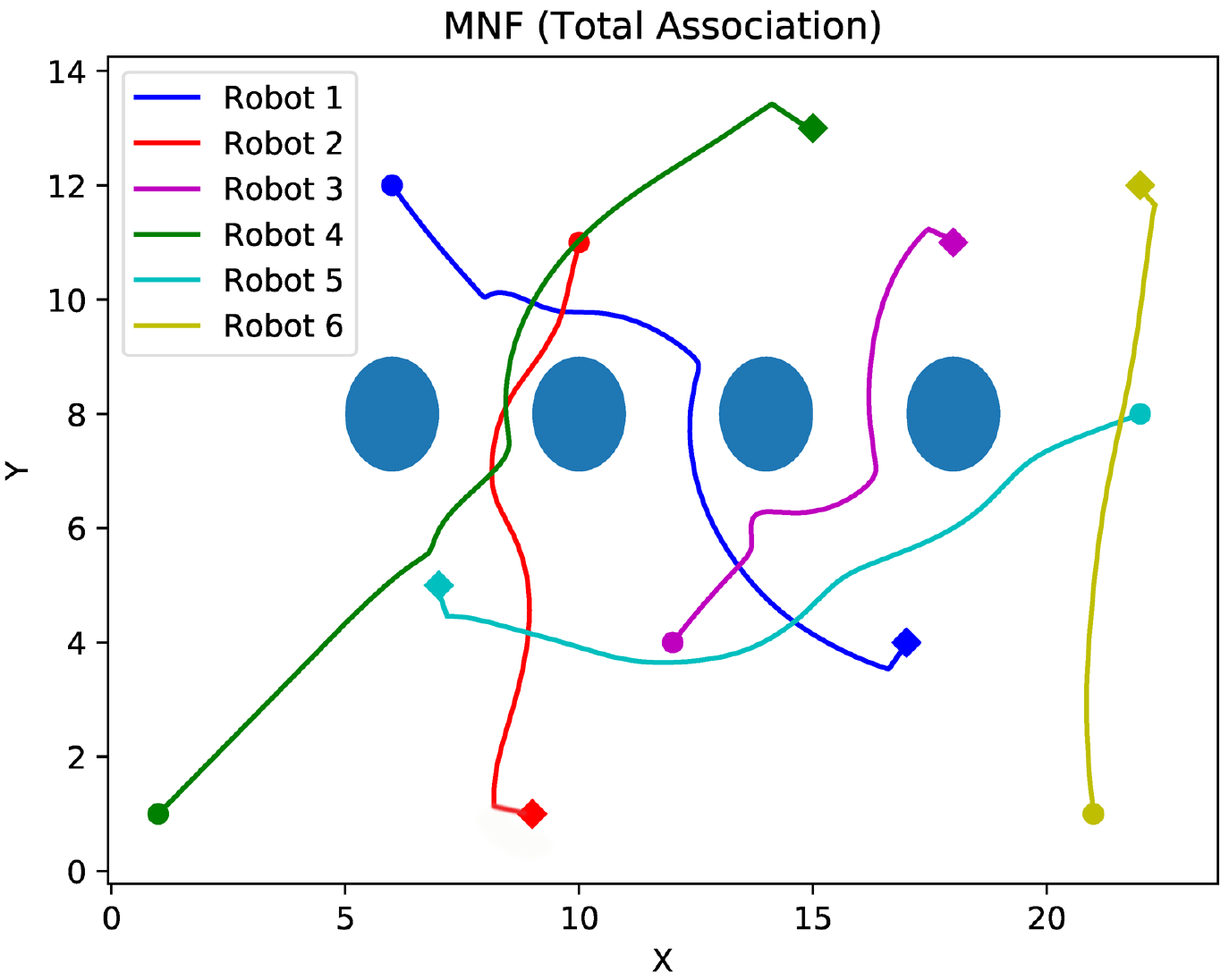}
	\caption{A typical trajectory evolution using MNF under total associations}
	\label{fig:ttrj}
\end{figure}
\begin{figure}
	\centering
	\includegraphics[scale=0.5]{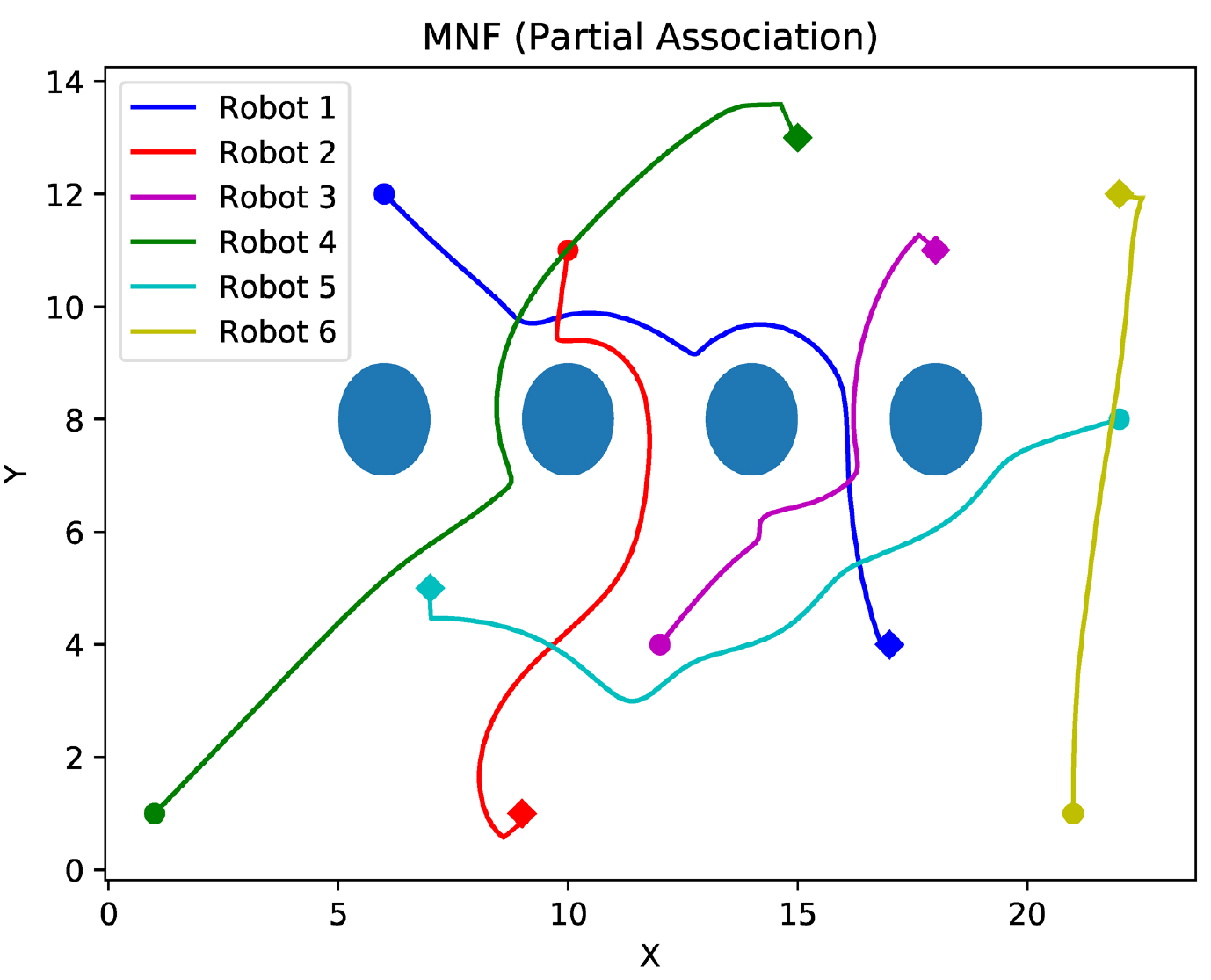}
	\caption{A typical trajectory evolution using MNF under partial associations.}    
	\label{fig:ptrj}
\end{figure}
\begin{figure}[t]
	\centering\includegraphics[scale=0.32]{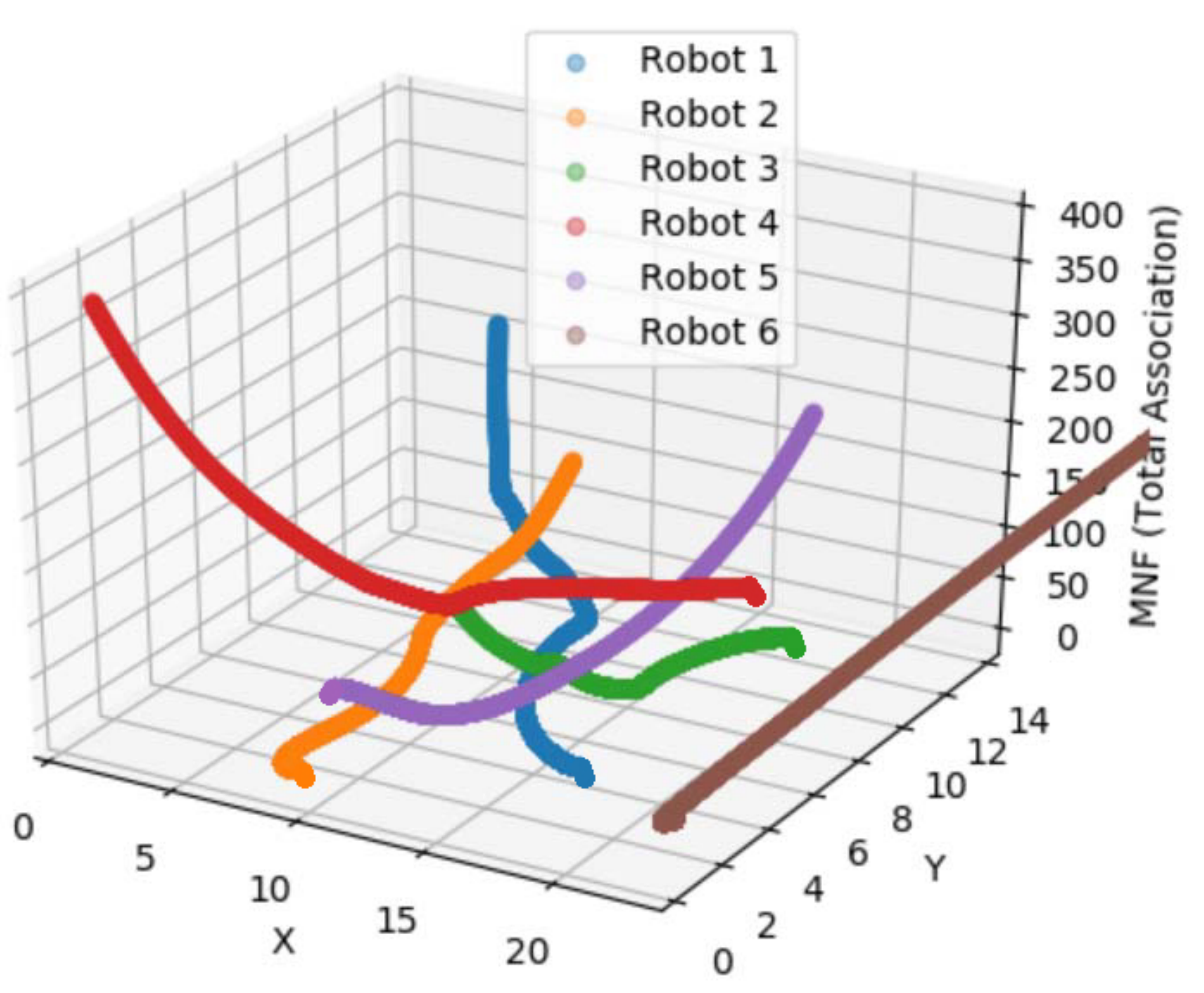}
	\caption{A typical potential evolution using MNF under total associations} 
	\label{fig:tpot}
\end{figure}
\begin{figure}[t]  
	\centering
	\includegraphics[scale=0.32]{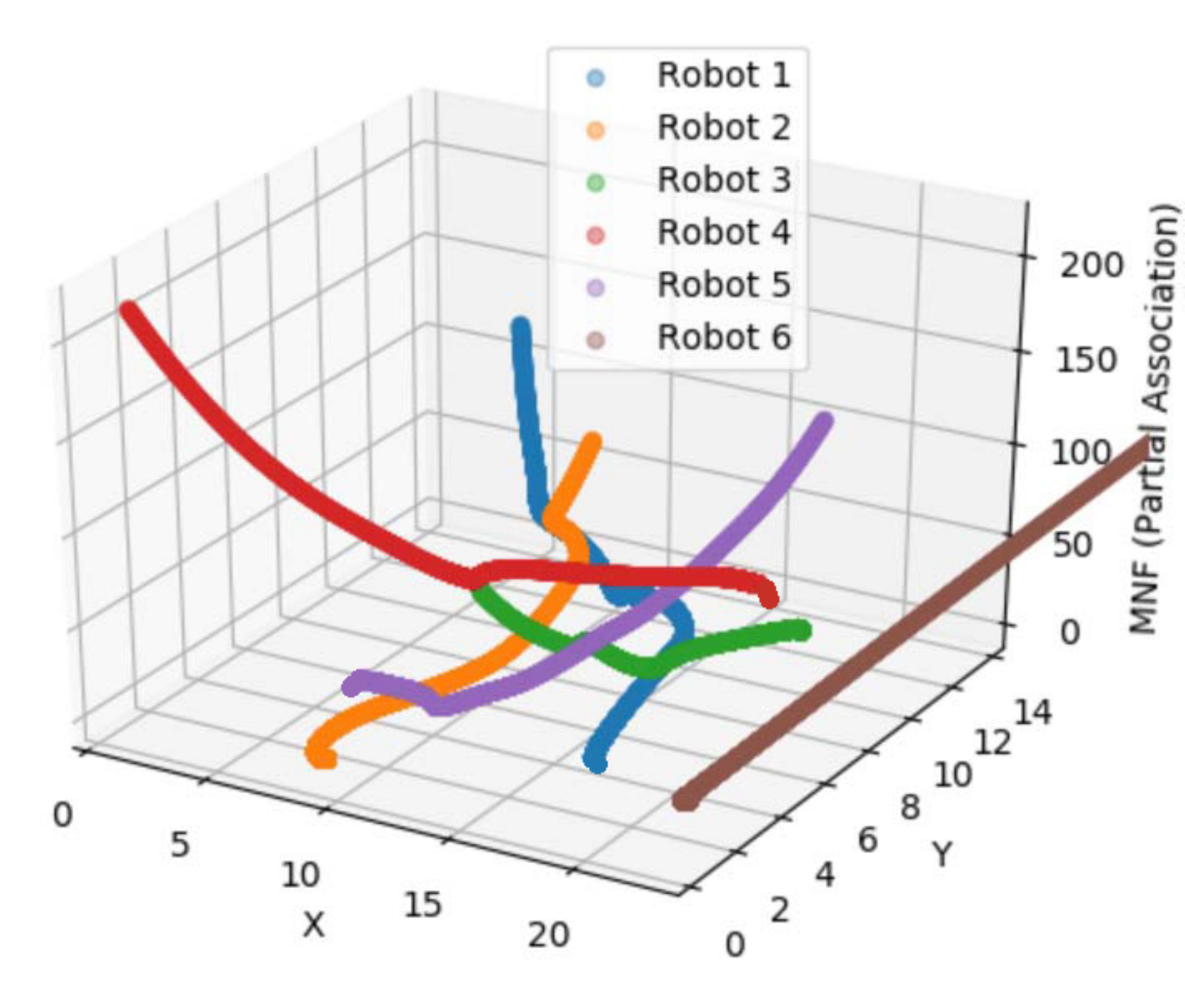}
	\caption{A typical potential evolution using MNF under partial associations.}   
	\label{fig:ppot}
\end{figure}
\section{Conclusions}
\label{sec:conc}
This paper introduces a new class of artificial potential fields whose instances, contrary to decentralized navigation function (DNF), do not need to be Morse. Instead, the coordination process of an agent is done by a meta navigation function (MNF) in two phases. First, a confined function coordinates the agent to its farthest CP with respect to its target point. During this phase, the agent cooperate to its peers to send and receive associative attractions. In the second phase, an attraction kernel drives the agent to its target in its confinement region which includes no other obstacles. The adaptive formulation of each MNF instance paves the way for realizing more social behaviors for agents during their coordination, which is out of the scope of the traditional DNFs' abilities. MNFs also surpass DNFs in faster convergence. The applied Morse relaxation makes the family of MNFs wider than that of DNFs. So, it is easier to design effective MNFs compared to DNFs. The association model presented in this paper is total, in that each agent equally contributes to the dynamics of all of the other agents of its system. The results demonstrate that such a united behavior leads to fast coordination of multi-agent systems. In a similar vein, one may take partial associations into account. In this case, each agent may belong to a particular coalition constructed by a subset of the agents of a system. So, the agents of each coalition cooperate with each other, but they are indifferent to other coalitions except being influenced by their hostile repulsive impacts.

\bibliographystyle{IEEEtr}
\bibliography{references}{}	

\begin{thebibliography}{10}

\bibitem{siciliano2016springer}
B.~Siciliano and O.~Khatib, {\em Springer handbook of robotics}.
\newblock Springer, 2016.

\bibitem{warren1989global}
C.~W. Warren, ``Global path planning using artificial potential fields,'' in
  {\em Proceedings, 1989 International Conference on Robotics and Automation},
  pp.~316--321, Ieee, 1989.

\bibitem{fahimi2009autonomous}
F.~Fahimi, ``Autonomous robots,'' {\em Modeling, Path Planning and Control/F.
  Fahimi--New York: Springer}, 2009.

\bibitem{macktoobian2016time}
M.~Macktoobian and M.~A. Shoorehdeli, ``Time-variant artificial potential field
  (tapf): a breakthrough in power-optimized motion planning of autonomous space
  mobile robots,'' {\em Robotica}, vol.~34, no.~5, pp.~1128--1150, 2016.

\bibitem{tanner2012multiagent}
H.~G. Tanner and A.~Boddu, ``Multiagent navigation functions revisited,'' {\em
  IEEE Transactions on Robotics}, vol.~28, no.~6, pp.~1346--1359, 2012.

\bibitem{palais2000generalized}
R.~Palais and S.~Smale, ``A generalized morse theory,'' in {\em The Collected
  Papers of Stephen Smale: Volume 2}, pp.~503--510, 2000.

\bibitem{de2006formation}
M.~C. De~Gennaro and A.~Jadbabaie, ``Formation control for a cooperative
  multi-agent system using decentralized navigation functions,'' in {\em 2006
  American Control Conference}, pp.~6--pp, IEEE, 2006.

\bibitem{macktoobian2021astrobotics}
M.~Macktoobian, D.~Gillet, and J.-P. Kneib, ``Astrobotics: swarm robotics for
  astrophysical studies,'' {\em IEEE Robotics \& Automation Magazine}, 2021.

\bibitem{macktoobian2021data}
M.~Macktoobian, F.~Basciani, D.~Gillet, and J.-P. Kneib, ``Data-driven
  convergence prediction of astrobots swarms,'' {\em IEEE Transactions on
  Automation Science and Engineering}, 2021.

\bibitem{macktoobian2019complete}
M.~Macktoobian, D.~Gillet, and J.-P. Kneib, ``Complete coordination of robotic
  fiber positioners for massive spectroscopic surveys,'' {\em Journal of
  Astronomical Telescopes, Instruments, and Systems}, vol.~5, no.~4, p.~045002,
  2019.

\end{thebibliography}
\end{document}